\documentclass[a4paper,fleqn]{cas-sc}

\usepackage[numbers,sort&compress]{natbib}
\usepackage{amsthm}
\usepackage{algorithm}
\usepackage{algpseudocode}

\newtheorem{theorem}{Theorem}[section]

\newtheorem{definition}[theorem]{Definition}

\theoremstyle{remark}

\def\tsc#1{\csdef{#1}{\textsc{\lowercase{#1}}\xspace}}
\tsc{WGM}
\tsc{QE}
\tsc{EP}
\tsc{PMS}
\tsc{BEC}
\tsc{DE}

\begin{document}
\let\WriteBookmarks\relax
\def\floatpagepagefraction{1}
\def\textpagefraction{.001}
\shorttitle{A Residual-Shell-Based Lower Bound for ORC}
\shortauthors{X. Gu, et al.}

\title [mode = title]{A Residual-Shell-Based Lower Bound for Ollivier–Ricci Curvature}                      



\author[1]{Xiang Gu}
\ead{xianggu@xjtu.edu.cn}

\credit{Conceptualization of this study, Methodology, Software}

\affiliation[1]{organization={School of Mathematics and Statistics, Xi'an Jiaotong University},
                city={Xi'an},
                postcode={710049}, 
                state={Shaanxi},
                country={China}}

\author[2]{Huichun Zhang}
\ead{zhanghc3@mail.sysu.edu.cn}

\author[1]{Jian Sun}
\ead{jiansun@xjtu.edu.cn}

\affiliation[2]{organization={School of Mathematics, Sun Yat-sen University},
                city={Guangzhou},
                postcode={510275},
                state={Guangdong},
                country={China}}
\cormark[1]
\credit{Data curation, Writing - Original draft preparation}
\cortext[cor1]{Corresponding author}





\begin{abstract}
As a powerful and descriptive invariant, curvature plays a fundamental role in graph analysis. In particular, Ollivier--Ricci curvature (ORC), defined via the Wasserstein distance that captures rich geometric information, has received growing attention in both theories and applications. However, the high computational cost of Wasserstein distance evaluation has significantly limited the broader practical use of ORC. To alleviate this issue, \citet{kang2024accelerated} proposed a computationally efficient lower bound as a proxy for ORC defined based on 1-hop random walks, but this approach empirically suffers from large gaps against the exact ORC. In this paper, we establish a substantially tighter lower bound based on residual shell for ORC than that of \citet{kang2024accelerated}, while retaining much lower computational cost than exact ORC computation with speedups of tens of times in practice. Moreover, our bound is not restricted to 1-hop random walks, but is applicable for $k$-hop random walks ($k>1$). Experiments on several fundamental graph structures demonstrate the effectiveness of our bound in terms of both approximation accuracy and computational efficiency. The code is available at \url{https://github.com/XJTU-XGU/ORC_bound}.
\end{abstract}



\begin{keywords}
 Ollivier--Ricci curvature \sep tighter lower bound \sep fast computation
\end{keywords}

\maketitle

\section{Introduction}
Curvature, a fundamental and illuminating invariant reflecting the local shape feature of a space or graph, plays an important role in both mathematical theories and real-world applications. In particular, the Ollivier-Ricci curvature (ORC)~\citep{ollivier2007ricci,ollivier2009ricci} represents a key curvature measure for graphs, offering insights into variations observed in random walks using the Wasserstein distance from optimal transport. Based on optimal transport and probability theory, ORC integrates geometric insights into the analysis of graphic structures, and has been demonstrated to be useful in various scenarios, such as in assessing disparities between real-world networks \citep{samal2018comparative} and in finding bottlenecks in real-world networks \citep{gosztolai2021unfolding}.

While impressive, ORC suffers from high computational costs due to the evaluation of the Wasserstein distance, which limits its practical applicability, especially for large-scale graphs. To address this challenge, a natural strategy is to develop a computationally efficient proxy for the Wasserstein distance. In this context, a tight lower bound of ORC is particularly valuable, as it provides a theoretically guaranteed and computationally efficient surrogate for the true curvature in large-scale graph analysis.
Specifically, \citet{jost2014ollivier} presented a lower bound of ORC in continuous metric spaces, which was recently extended by \citet{kang2024accelerated} to a broader context, including spaces with metrics on integers with elements of laziness incorporated into their structure, allowing applicability to a wider variety of computational scenarios. 

Although the lower bound proposed in \citep{jost2014ollivier,kang2024accelerated} can be computed efficiently, it suffers from two major limitations. First, the gap between this bound and the true ORC can be considerable, resulting in substantial approximation error. Second, it is tailored only to 1-hop random walks, thereby restricting its applicability to ORC defined through \(k\)-hop random walks for \(k>1\). 

To address these issues, in this paper, we derive a new lower bound for ORC based on a shell-wise coupling argument. Our main idea is to decompose the transport across graph-distance shells, match as much mass as possible within low-cost local shells, and bound the remaining residual transport in a controlled way. This yields an explicit upper bound for the Wasserstein distance and hence a new residual-shell (RS) based lower bound for ORC. Our bound inherently applies to general ORC defined based on \(k\)-hop random walks. Meanwhile, we present an algorithm to compute the derived lower bound. 
Experimentally, on several kinds of basic graph structures, we show that our RS-based lower bound is much tighter than that in \citep{kang2024accelerated}, and retains much smaller computational cost than direct ORC computation, achieving speedups of tens of times in practice.

\section{Preliminaries}

\paragraph{Graph theory.}
A graph \(G=(V,E)\) consists of a set of vertices \(V\) and a set of edges \(E\), where each edge connects a pair of vertices. This paper focuses on undirected graphs, in which edges are unordered pairs. Two vertices \(u,v\in V\) are said to be adjacent, denoted by \(u\sim v\), if there is an edge \(e=\{u,v\}\in E\) connecting them. We consider locally finite graphs, meaning that each vertex is incident to only finitely many edges. A simple graph contains at most one edge between any pair of vertices and has no self-loops. {Two vertices are connected if there exists a path between them.
Following~\cite{kang2024accelerated}, in this paper, we assign every edge a length of one, and the graph distance \(d(u,v)\) is defined as the length of a shortest path from \(u\) to \(v\). In a weighted graph, each edge is assigned a positive numerical weight, and the weight between two adjacent vertices \(u\) and \(v\) is denoted by \(w_{uv}\), with \(w_{uv}=w_{vu}\).} 
The $k$-hop neighborhood of a vertex \(v\), denoted by \(\mathcal N_k(v)\), is defined by $\mathcal N_k(v) = \{v':d(v',v)\leqslant k\}$.
For simplicity, we denote $\mathcal N_1(v)$ as $\mathcal N(v)$. 
We define $k$-hop degree of a vertex \(v\) as $d_v^k=|\mathcal N_k(v)|$. If $k=1$, $d_v^1=|\mathcal N(v)|$ is the degree in other graph literature.  

\paragraph{Ollivier-Ricci curvature.} 
{For any node \(x\in V\), let \(P\) denote the transition matrix of the weighted random walk on \(G\), i.e.,
$P(u,v)= {w_{uv}}/{\sum_{v'\in\mathcal N(u)}w_{uv'}}, \text{ if } \{u,v\}\in E$, otherwise 0. Note that for unweighted graphs, $w_{uv}=1$}. We define the \(k\)-hop lazy random walk measure centered at \(x\) as
\begin{equation}\label{eq:k_hop_measure}
 \mu_x^k(y)=(1-\alpha)\tilde{\mu}_x^k(y)+\alpha\delta_x(y),
 \text{ with }
\tilde{\mu}_x^k(y)=P^k(x,y),   
\end{equation}
where \(\alpha\in(0,1)\), $P^k$ is $k$-th power of matrix $P$, and \(\delta_x\) is the Dirac measure at \(x\).
Given any two local measures $\mu_x^k$ and $\mu_y^k$, the Wasserstein distance between them is defined by 
\begin{equation}
    W_1(\mu_x^k,\mu_y^k) = \min_{\gamma\in \Pi(\mu_x^k,\mu_y^k)} \sum_{x'\in \mathcal N_k(x)} \sum_{y'\in \mathcal N_k(y)} d(x',y')\gamma(x',y'),
\end{equation}
where $\Pi(\mu_x^k,\mu_y^k)$ is the set of joint distributions with marginals $\mu_x^k$ and $\mu_y^k$, i.e., 
\begin{equation}
    \Pi(\mu_x^k,\mu_y^k)= \left\{ \gamma\in\mathbb R^{d_x^k\times d_y^k}:\gamma\geqslant0, \sum_{y'\in\mathcal N_k(y)}\gamma(x',y')=\mu_x^k(x'),  \sum_{x'\in\mathcal N_k(x)}\gamma(x',y')=\mu_y^k(y')\right\}.
\end{equation}
Intuitively, the Wasserstein distance is the minimum cost of transport the mass from $\mu_x^k$ to $\mu_y^k$.  
We now introduce the definition of Ollivier-Ricci curvature on graphs. 
\begin{definition}[Ollivier-Ricci curvature]
    Given the graph $(V,E)$, for any nodes $x,y\in V$, the Ollivier-Ricci curvature based on $k$-hop random walks is defined by 
    \begin{equation}
        \kappa(x,y) = 1 - \frac{W_1(\mu^k_x,\mu^k_y)}{d(x,y)},
    \end{equation}
    where $\mu^k_x,\mu^k_y$ are the $k$-hop measures defined in \eqref{eq:k_hop_measure}.
\end{definition}
Due to the high computational cost of the Wasserstein distance, computing ORC remains challenging. To alleviate this issue, \citet{jost2014ollivier} derived a lower bound for ORC that can be computed more efficiently, which was extended by \citet{kang2024accelerated}. However, those bounds are limited to the ORC defined based on $1$-hop random walks. Meanwhile, our empirical results indicate that the bound in \citep{kang2024accelerated} incurs substantial approximation error when used to estimate ORC. This motivates us to develop a tighter bound for ORC under general $k$-hop random walks.

\section{Bound for Ollivier-Ricci Curvature}
Our main idea for deducing the lower bound is to decompose the transport across graph-distance shells, and then match as much mass as possible within low-cost local shells. Finally, we bound the remaining residual transport in a controlled way. For $k$-hop local measures $\mu_x^k,\mu_y^k$, we define the shell sets as 
\begin{equation}
    E_r = \{(u,v)\in \mathcal{N}_k(x)\times \mathcal{N}_k(y): d(u,v)=r\}, 
\; r=0,1,\ldots,l.
\end{equation} We construct a partial transport plan shell by shell as follows. 
We initialize the residual marginals by
\begin{equation}
a_u^{(0)}=\mu_x^k(u),
\;
b_v^{(0)}=\mu_y^k(v),
\; \mbox{ for } u\in \mathcal{N}_k(x),\ v\in \mathcal{N}_k(y).
\end{equation}
For each shell \(r=0,1,\ldots,l\), let $f_{uv}^{(r)} \mbox{ for } (u,v)\in E_r$ be any nonnegative allocation satisfying $\sum_{v:(u,v)\in E_r} f_{uv}^{(r)} \leqslant a_u^{(r)},
\sum_{u:(u,v)\in E_r} f_{uv}^{(r)} \leqslant b_v^{(r)}.$
Then we update the residual marginals by
\begin{equation}
  a_u^{(r+1)}
=
a_u^{(r)}-\sum_{v:(u,v)\in E_r} f_{uv}^{(r)},
\quad
b_v^{(r+1)}
=
b_v^{(r)}-\sum_{u:(u,v)\in E_r} f_{uv}^{(r)}.
\end{equation}
We define the transported mass $m_r$ on the $r$-th shell and residual mass $R_l$ after the $l$-th shell by
\begin{equation}
m_r = \sum_{(u,v)\in E_r} f_{uv}^{(r)}, \quad R_l = 1-\sum_{r=0}^l m_r.
\end{equation}
Typically, $f_{uv}^{(r)}$ is set as $f_{uv}^{(r)} = \min\{a_u^{(r)},\, b_v^{(r)}\}$. With above constructing process, we have the following theorem. 


\begin{theorem}[Residual-shell-based bound]
\label{thm:residual_shell_upper_bound}
Under the above notations, let $\bar r$ be any constant such that
$
d(u,v)\leqslant \bar r \text{ whenever } a_u^{(l+1)}>0 \text{ and } b_v^{(l+1)}>0.
$
Then, the Wasserstein distance satisfies
\begin{equation}
   W_1(\mu_x^k,\mu_y^k) \leqslant \sum_{r=0}^l r\,m_r + \bar r\,R_l. 
\end{equation}
Consequently, we have
\begin{equation}
    \kappa(x,y)\geqslant 1-\frac{\sum_{r=0}^l r\,m_r + \bar r\,R_l}{d(x,y)}.
\end{equation}
\end{theorem}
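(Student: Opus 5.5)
The plan is to exhibit an explicit coupling $\gamma\in\Pi(\mu_x^k,\mu_y^k)$ whose cost is at most $\sum_{r=0}^l r\,m_r+\bar r R_l$. Since $W_1$ is a minimum over couplings, the first inequality follows. The curvature bound is then immediate from the definition $\kappa(x,y)=1-W_1(\mu_x^k,\mu_y^k)/d(x,y)$, because $d(x,y)>0$ for $x\neq y$.

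\textbf{Step 1 (shell part).} Set $f=\sum_{r=0}^l f^{(r)}$ on $\mathcal N_k(x)\times\mathcal N_k(y)$, extending each $f^{(r)}$ by zero outside $E_r$. The shells $E_r$ are disjoint, so $f\geqslant 0$ is well defined. The allocation constraints together with the update rule give, by induction on $r$, that $a_u^{(r)},b_v^{(r)}\geqslant 0$. Summing the updates over $r$ yields
\begin{equation*}
\sum_{v}f(u,v)=\mu_x^k(u)-a_u^{(l+1)},\qquad \sum_u f(u,v)=\mu_y^k(v)-b_v^{(l+1)}.
\end{equation*}
Moreover, $d(u,v)=r$ on $E_r$, so $\sum_{u,v}d(u,v)f(u,v)=\sum_{r=0}^l r\,m_r$.

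\textbf{Step 2 (residual part).} The residual masses balance: $\sum_u a_u^{(l+1)}=1-\sum_r m_r=R_l=\sum_v b_v^{(l+1)}$, since both $\mu_x^k$ and $\mu_y^k$ are probability measures. If $R_l=0$, set $g=0$. Otherwise take the product coupling $g(u,v)=a_u^{(l+1)}b_v^{(l+1)}/R_l$. Its marginals are exactly $a^{(l+1)}$ and $b^{(l+1)}$, and its total mass is $R_l$. The key observation is that $g(u,v)>0$ only when both $a_u^{(l+1)}>0$ and $b_v^{(l+1)}>0$. By the hypothesis on $\bar r$, this forces $d(u,v)\leqslant\bar r$ on the support of $g$, so $\sum d(u,v)g(u,v)\leqslant \bar r R_l$.

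\textbf{Step 3 (assemble).} Let $\gamma=f+g$. Then $\gamma\geqslant 0$, and by Steps 1–2 its marginals are $\mu_x^k$ and $\mu_y^k$, so $\gamma\in\Pi(\mu_x^k,\mu_y^k)$. Adding the two cost estimates gives the claimed bound on $W_1(\mu_x^k,\mu_y^k)$.

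The only delicate point is Step 2. The residual plan must charge cost only to pairs where both residuals are positive, since that is all the hypothesis on $\bar r$ controls. The product coupling achieves this automatically. By contrast, an arbitrary coupling of the residuals might not, and a degenerate case such as $R_l=0$ has to be handled separately. Everything else is bookkeeping of nonnegativity and marginal sums.
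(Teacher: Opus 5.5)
Your proof is correct and follows the paper's argument essentially step for step: the paper also sums the shell allocations into a partial plan, completes it with a coupling of the residual marginals (normalized by $R_l$, with $R_l=0$ treated separately), and bounds the residual cost by $\bar r R_l$. The only difference is that the paper allows an arbitrary coupling $\gamma$ of the residuals instead of your product coupling. That is legitimate because every coupling of $a^{(l+1)}$ and $b^{(l+1)}$ satisfies $\gamma(u,v)\leqslant\min\{a_u^{(l+1)},b_v^{(l+1)}\}$, so it is automatically supported where both residuals are positive; your worry that an arbitrary coupling might fail this is unfounded, though harmless.
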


\begin{proof}
We construct a feasible coupling $\pi\in\Pi(\mu_x^k,\mu_y^k)$ yielding the bound in two parts.
First, we aggregate the mass transported within the first $l$ shells to form a partial plan $\pi_{\leqslant l}$.
Then we redistribute the remaining mass via an auxiliary coupling whose transport cost is uniformly bounded by $\bar r$.

\textit{Step 1: Construction of the partial transport plan.}
Define a partial transport plan on $\mathcal{N}_k(x)\times \mathcal{N}_k(y)$ by
\begin{equation}
   \pi_{\leqslant l}(u,v) = \sum_{r=0}^l f_{uv}^{(r)},
\end{equation}
with the convention that $f_{uv}^{(r)}=0$ whenever $(u,v)\notin E_r$.
By construction, we have $\pi_{\leqslant l}(u,v)\geqslant 0$, and its support is contained in the union of the first $l$ shells, namely,
$\operatorname{supp}(\pi_{\leqslant l}) \subseteq \bigcup_{r=0}^l E_r.$

\textit{Step 2: Residual marginals and remaining mass.}
Next, define the residual marginals
\begin{equation}\label{eq:au_bv}
    a(u) = \mu_x^k(u)-\sum_{v\in \mathcal{N}_k(y)}\pi_{\leqslant l}(u,v),
\; 
    b(v) = \mu_y^k(v)-\sum_{u\in \mathcal{N}_k(x)}\pi_{\leqslant l}(u,v).
\end{equation}
By the update rule, we have $ a(u)=a_u^{(l+1)}\geqslant 0, b(v)=b_v^{(l+1)}\geqslant 0.$
Moreover, summing over $u$ and $v$ yields
\begin{equation}
\sum_{u\in \mathcal{N}_k(x)} a(u)
=
\sum_{v\in \mathcal{N}_k(y)} b(v)
= \sum_{u\in \mathcal{N}_k(x)} \mu_x^k(u)-\sum_{u\in \mathcal{N}_k(x)}\sum_{v\in \mathcal{N}_k(y)} \sum_{r=0}^l f_{uv}^{(r)}
=
1-\sum_{r=0}^l\sum_{(u,v)\in E_r}f_{uv}^{(r)}
=
R_l.
\end{equation}

\textit{Step 3: Completion of the coupling via residual transport.}
We now complete the residual mass into a full coupling.
If $R_l=0$, we set $\pi_{>l}\equiv 0$.
Otherwise, define the normalized residual measures
\begin{equation}
   \hat a(u) = \frac{a(u)}{R_l},
\; 
   \hat b(v) = \frac{b(v)}{R_l}.
\end{equation}
Then $\hat a$ and $\hat b$ are both probability measures on $\mathcal{N}_k(x)$ and $\mathcal{N}_k(y)$, respectively.
Let $\gamma\in\Pi(\hat a,\hat b)$ be any coupling between them.
By the definition of $\bar r$, every pair $(u,v)$ with $a(u)>0$ and $b(v)>0$ satisfies $d(u,v)\leqslant \bar r$, and hence
\begin{equation}
    \sum_{u,v} d(u,v)\,\gamma(u,v)\leqslant \bar r.
\end{equation}
Define
$\pi_{>l}=R_l\,\gamma.$
Then $\pi_{>l}$ has marginals $a$ and $b$. Defining $\pi=\pi_{\leqslant l}+\pi_{>l}$
and using \eqref{eq:au_bv}, we have $\pi\in\Pi(\mu_x^k,\mu_y^k)$.

\textit{Step 4: Transport cost decomposition.}
Since $d(u,v)=r$ on $E_r$, we have
\begin{equation}
    \sum_{u,v} d(u,v)\,\pi_{\leqslant l}(u,v)
=\sum_{r=0}^l \sum_{(u,v)\in E_r} r\,f_{uv}^{(r)}
=\sum_{r=0}^l r\,m_r.
\end{equation}
For the residual part,
\begin{equation}
    \sum_{u,v} d(u,v)\,\pi_{>l}(u,v)
=
R_l \sum_{u,v} d(u,v)\,\gamma(u,v)
\leqslant \bar r\,R_l.
\end{equation}

\textit{Step 5: Conclusion via Wasserstein bound.}
Combining the above two displays gives
\begin{equation}
\sum_{u,v} d(u,v)\,\pi(u,v)
\leqslant
\sum_{r=0}^l r\,m_r+\bar r\,R_l.
\end{equation}
Since $\pi$ is a feasible coupling of $\mu_x^k$ and $\mu_y^k$, by the definition of $W_1$, we have
\begin{equation}
W_1(\mu_x^k,\mu_y^k)
\leqslant
\sum_{u,v} d(u,v)\,\pi(u,v)
\leqslant
\sum_{r=0}^l r\,m_r+\bar r\,R_l.
\end{equation}

Finally, using the definition of ORC, we have
\begin{equation}
    \kappa(x,y)
\geqslant
1-\frac{\sum_{r=0}^l r\,m_r+\bar r\,R_l}{d(x,y)}.
\end{equation}
\end{proof}

For fixed $\bar r$, the upper bound on the Wasserstein distance is monotone non-increasing with respect to $l$. Therefore, a larger $l$ may lead to a tighter bound. In practice, we set 
\begin{equation}
   \bar r = \max\{d(u,v):a(u)>0,b(v)>0\}. 
\end{equation}
We emphasize that our bound applies to general $k$-hop lazy random walk measures, while the standard random walk and the 1-hop measures are both included as special cases. 
{Unlike some combinatorial formulations that characterize Ollivier–Ricci curvature using local cycle counts (e.g., \citep{kelly2019self}), our residual-shell construction relies only on shortest-path distances and is applicable to graphs with cycles of different lengths.}
We provide the algorithm \ref{alg:residual_shell_bound} to compute the bound.

\begin{algorithm}[t]
\caption{Residual-shell-based bound for $\kappa(x,y)$}
\label{alg:residual_shell_bound}
\begin{algorithmic}[1]
\Require Nodes $x,y$, measures $\mu_x^k,\mu_y^k$, shell depth $l$, shortest path length $d$
\Ensure Lower bound ${\kappa}_{xy}$

\State Initialize residual masses: $a_u \gets \mu_x^k(u)$ and $b_v \gets \mu_y^k(v)$ for $u\in\mathcal N_k(x)$ and $v\in\mathcal N_k(y)$
\State Set $m_r \gets 0$ for all $r=0,\dots,l$

\For{$r=0,1,\dots,l$}
    \For{each $(u,v)\in E_r$}
        \State $\delta \gets \min\{a_u,b_v\}$
        \State $f_{uv}^{(r)} \gets \delta$
        \State $a_u \gets a_u-\delta$, $b_v \gets b_v-\delta$, and $m_r \gets m_r+\delta$
    \EndFor
\EndFor

\State Define $R_l \gets \sum_{u\in\mathcal N_k(x)} a_u$ and $\bar r \gets \max\{d(u,v):a(u)>0,b(v)>0\}$
\State Compute $U_l \gets \sum_{r=0}^l r\,m_r+\bar r\,R_l$ and ${\kappa}_{xy}\gets 1-\frac{U_l}{d(x,y)}$
\State \Return ${\kappa}_{xy}$
\end{algorithmic}
\end{algorithm}

\section{Numerical Evaluation}
This section evaluates the computational efficiency and tightness of our residual-shell-based bound for ORC. We consider three approaches for computing the ORC or its proxy: the exact OCR that computes the Wasserstein distance by linear programming, the lower bounds proposed by \citet{kang2024accelerated} (KP-LB), and our proposed residual-shell-based lower bound (RS-LB). {The evaluation involves unweighted synthetic graphs, unweighted and weighted real-world graphs. For synthetic graphs, we generate representative graphs, including Erdős--Rényi (ER) graphs, Barabási--Albert (BA) graphs, Watts--Strogatz (WS) graphs, and planar regular grids. The planar regular grid includes triangular, square, and hexagonal grids.  We use the Python package \texttt{networkx} to generate the synthetic graphs. For larger real-world graphs, we employ two widely adopted real-world graph datasets, Amazon \cite{dou2020enhancing} and Facebook \cite{xu2022contrastive}, in graph anomaly detection. Amazon (Am) contains 10224 nodes and 175608 edges; Facebook (FB) includes 1081 nodes and }{27552 edges. These two graphs are originally unweighted. Additionally, we build a weighted graph based on Amazon and Facebook (denoted as WAm and WFB, respectively) by defining the weight of each edge $(u,v)$ as $w_{uv}=\exp(-\frac{1}{2\sigma^2}{\|f_u-f_v\|_2^2}).$ Here $f_u$ is the feature of node $u$, and $\sigma^2$ is the variance of all feature distances. }

\begin{figure}
    \centering
    \includegraphics[width=1\linewidth]{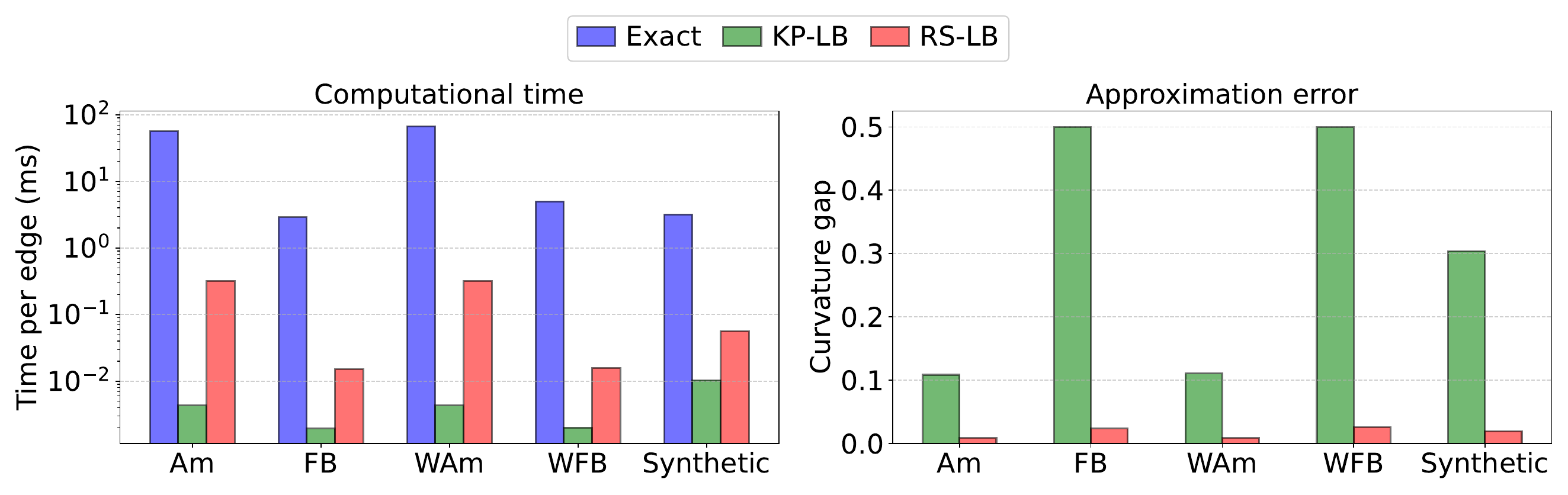}
    \caption{Left: computational time cost of exact ORC computed using linear programming, the lower bounds including KP-LB~\citep{kang2024accelerated} and our RS-LB, measured via runtime per edge. Right: approximation error of KP-LB and RS-LB, measured by the mean absolute gap between the bound and the exact ORC. Note that the results on synthetic graphs are aggregated.}
    \label{fig:time_error}
\end{figure}

\begin{figure}
    \centering
    \includegraphics[width=1\linewidth]{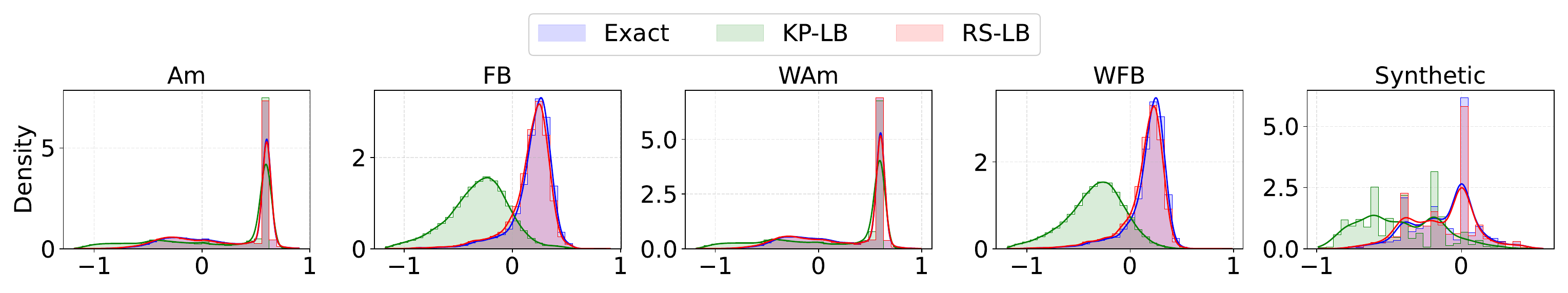}
    \caption{Histograms of the exact curvature and its lower bounds including KP-LB~\citep{kang2024accelerated} and our derived RS-LB, on four graphs. The curves are the corresponding estimated density functions.}
    \label{fig:hist}
\end{figure}
\begin{figure}
    \centering
    \includegraphics[width=0.98\linewidth]{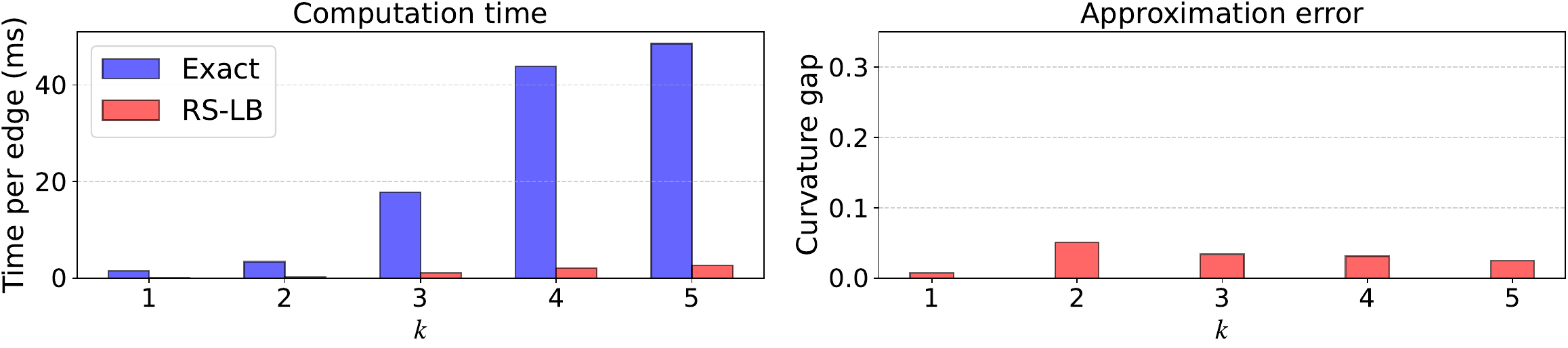}
    \caption{Computational time and approximation error of RS-LB for ORC defined on $k$-hop random walks for varying $k$.}
    \label{fig:k_hop}
\end{figure}

In the implementation of algorithm~\ref{alg:residual_shell_bound}, we set $l=3$, as larger values yield similar bounds. To compare with KP-LB~\cite{kang2024accelerated}, we first conduct experiments on 1-hop lazy random walks with $\alpha=0.4$. Figure~\ref{fig:time_error} compares the computational time of three methods, and the approximation error of KP-LB and our derived RS-LB, measuring the tightness of bounds. We can observe that our bound RS-LB takes more time than KP-LB, but still yields speedups of tens of times over the exact computation of ORC. In the right part of Figure~\ref{fig:time_error}, our RS-LB achieves a significantly smaller mean absolute gap, indicating that RS-LB is much tighter than KP-LB. In Figure~\ref{fig:hist}, we plot the histograms of the exact ORC and values of KP-LB and RS-LB. We can see that the histograms of our RS-LB are much closer to those of the exact ORC, demonstrating a better approximation achieved by RS-LB. In particular, on the Grid graph, RS-LB perfectly matches the exact ORC and takes much shorter computational time. 

We also evaluate our RS-LB on ORC defined on $k$-hop random walks.  Note that KP-LB is not applicable to this $k$-hop setting. 
Figure~\ref{fig:k_hop} reports the computational time and approximation error of RS-LB for ORC defined on $k$-hop random walks.  As demonstrated in Figure~\ref{fig:k_hop}, RS-LB yields smaller approximation errors (below 0.05) and achieves 10- to 20-fold speedups across different $k$.

\section{Concluding Remarks}
This paper established a residual-shell-based lower bound (RS-LB) for the Ollivier--Ricci curvature. Numerical evaluations demonstrated that RS-LB is tighter than existing bounds while accelerating the computation by a factor of tens. Future work will focus on exploring broader downstream applications of RS-LB.
\vspace{0.2cm}


\noindent\textbf{Acknowledgment.} This work was supported by the National Key R\&D Program 2021YFA1003000, NSFC (12125104, 12501709).

\noindent\textbf{Declaration of generative AI and AI-assisted technologies in the manuscript preparation process.}
During the preparation of this work the authors used ChatGPT in order to polish the writing. After using this tool/service, the authors reviewed and edited the content as needed and take full responsibility for the content of the published article.




\bibliographystyle{cas-model2-names}

\bibliography{cas-refs}






\end{document}